\tikzset{
	-Latex,auto,node distance =1 cm and 1 cm,semithick,
	state/.style ={ellipse, draw, minimum width = 0.7 cm},
	point/.style = {circle, draw, inner sep=0.04cm,fill,node contents={}},
	bidirected/.style={Latex-Latex,dashed},
	el/.style = {inner sep=2pt, align=left, sloped}
}
\newcommand{\info}{\mathrm{\sigma}}
\newcommand\SCM{\textbf{SCM}}
\newcommand{\BlackBox}{\rule{1.5ex}{1.5ex}}  % end of proof
\newenvironment{proof}{\par\noindent{\bf Proof\ }}{\hfill\BlackBox\\[2mm]}
\newtheorem{example}{Example} 
\newtheorem{theorem}{Theorem}
\newtheorem{definition}[theorem]{Definition}
\title{Path-specific Effects Based on Information Accounts of Causality}
\author{%
  Gong Heyang \\
  Department of Statistics and Finance\\
  University of Science and Technology of China\\
  \texttt{zj3712@gmail.com} \\
  \And
  Zhu Ke \\
  Department of Statistics and Actuarial Science \\
  University of HongKong\\
  \texttt{mazhuke@hku.hk} \\
  
  % Affiliation \\
  % Address \\
  % \texttt{email} \\
  % \AND
  % Coauthor \\
  % Affiliation \\
  % Address \\
  % \texttt{email} \\
  % \And
  % Coauthor \\
  % Affiliation \\
  % Address \\
  % \texttt{email} \\
  % \And
  % Coauthor \\
  % Affiliation \\
  % Address \\
  % \texttt{email} \\
}
\begin{document}

\maketitle

\begin{abstract}
Path-specific effects in mediation analysis provide a useful tool for fairness analysis, which is mostly based on nested counterfactuals. However, the dictum ``no causation without manipulation'' implies that path-specific effects might be induced by certain interventions. This paper proposes a new path intervention inspired by information accounts of causality, and develops the corresponding intervention diagrams and $\pi$-formula. 
Compared with the interventionist approach of \citet{robins2020interventionist} based on nested counterfactuals, our proposed path intervention method explicitly describes the manipulation in structural causal model with a simple information transferring interpretation, and does not require the non-existence of recanting witness to identify path-specific effects. Hence, it could serve useful communications and theoretical focuses for mediation analysis.
\end{abstract}

\section{Introduction}

% 提出一个悖论

Analyzing the relative strength of different pathways between a decision $X$ and an outcome $Y$ is an interesting topic for both scientists and practitioners across many disciplines over several decades. The path-specific effects in mediation analysis have been demonstrated as a useful tool to tackle this topic \citep{baron1986moderator, robins1992identifiability, pearl2001direct}. This is one of the seven tasks summarized by Judea Pearl in causal inferences \citep{pearl2019seven}. To capture the path-specific effects, the often used approaches are based on nested counterfactuals; see \citet{avin2005identifiability}, \citet{shpitser2016causal},  \citet{zhang2018non}, and \citet{malinsky2019potential} for earlier references. 

More recently, \citet{robins2020interventionist} provided an interventionist theory of mediation to study some path-specific effects  (such as natural indirect/direct effects) in single world intervention graph (SWIG). However, their interventionist framework seems lack of a formally defined concept of intervention for the general path-specific effects, and it requires the recursive assumption to well define the nested counterfactuals. Also, their path specific counterfactuals are equal to interventional counterfactuals only when no recanting witness exists. Therefore, their method are restrictive in many situations. For example, 
to analyze the strength of path $\pi: X \to A \to Y$ between a decision $X$ and an outcome $Y$ in Fig. \ref{fig:base}(a) or (b), their method is not applicable any more.

%causal diagrams Fig. \ref{fig:base}(a) with recanting witness $A$ and Fig. \ref{fig:base}(b) with a loop between $A$ and $Y$. 

\begin{figure}[http]
\label{fig:base}
	\begin{center}
		\begin{tikzpicture}[>=stealth, node distance=1.2cm]
		\tikzstyle{format} = [draw, thick, circle, minimum size=4.0mm,
		inner sep=1pt]
		\tikzstyle{unode} = [draw, thick, circle, minimum size=1.0mm,
		inner sep=0pt,outer sep=0.9pt]
		\tikzstyle{square} = [draw, very thick, rectangle, minimum size=4mm]

	\begin{scope}[xshift=0cm]
		\path[->,  line width=0.9pt]
		node[format, shape=ellipse] (a) {$A$}
		node[format, shape=ellipse, above of=a] (m) {$M$}
		node[format, shape=ellipse, right of=a] (y) {$Y$}
		node[format, shape=ellipse, left of=a] (x) {$X$}
        (x) edge[red] (a)
		(a) edge[blue] (m)
		(m) edge[blue] (y)
		(a) edge[red] (y)
		node[below of=a]{(a)}	
		;
	\end{scope}

	\begin{scope}[xshift=6.0cm]
    	\path[->,  line width=0.9pt]
        node[format, shape=ellipse] (x) {$X$}
    	node[format, shape=ellipse, right of=x] (a) {$A$}
    	node[format, shape=ellipse, right of=a] (y) {$Y$}
    	(x) edge[red] (a)
    	(a) edge[red] (y)
    	(y) edge[blue, bend right=60] (a)
    	node[below of=a]{(b)}
    	;
	\end{scope}
		
		\end{tikzpicture}
		\end{center}
\caption{ (a) A causal diagram with a recanting witness $A$ for the path $\pi: X \to A \to Y$; (b) A cyclic causal diagram with a loop between $A$ and $Y$.}
\end{figure}
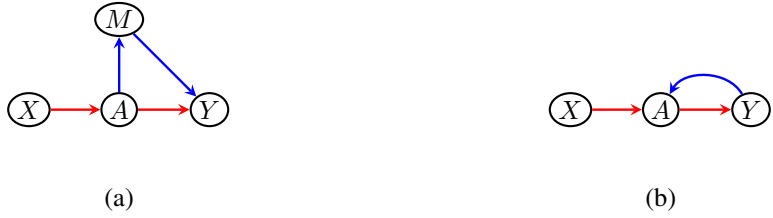

In this paper, we propose a path intervention based on the information accounts of causality. To understand causality as information transfer, which has been formally proposed and developed in philosophy since \citet{collier1999causation},  we decompose causal mechanisms into two parts: information transfer mechanisms and information process mechanisms. Specifically, each causal mechanism $f_i$ in a structural causal model (\SCM) decides how to process the collected information $e_{pa(i), i}$ from its input edges, where each $e_{j, i}$ decides the information transferred from its input node $j$ through the edge $j \to i$. Our novel path intervention is formally defined by explicitly manipulating mechanisms in the informational decomposition of \SCM. Henceforth, the path specific effects can be well defined by using our 
path intervention,  even for cyclic causal diagrams such as that in Fig. \ref{fig:base}(b), 

Moreover, we develop a $\pi$-formula for identifying our interventional counterfactuals. The use of the  $\pi$-formula does not need 
the absence of recanting witness, and hence it can be applied to identify the path-specific effects in Fig. \ref{fig:base}(a). 
As a graphical tool to assist causal reasoning,  we also propose the associated intervention diagram for different interventions under the informational decomposition of \SCM. Our path-specific effects can be interpreted as simply making the decision information $X = x'$ exclusively pass through the causal path $\pi$ while allowing all other information outside $\pi$ to behave naturally. This interpretation is different from those in 
 the existing graphical approaches based on nested counterfactuals.

The remaining paper is organized as follows.
Section \ref{sec:info} gives the preliminaries and the formal definition of informational decomposition of \SCM. Section \ref{sec:path} introduces our path intervention and intervention diagrams with some illustrating examples. Section \ref{sec:ident} proposes the identification formula for path-specific effects by using our path intervention. Concluding remarks and discussions are offered in Section \ref{sec:conclude}.

\paragraph{Notations.} Our notations below are similar to those in \citet{shpitser2020multivariate}. Fix a set of indices $V \equiv \{1, ..., n\}$. For each index $i \in V$, associate a random variable $X_i$ with state space $\mathcal{X}_i$. Given $A \subseteq V$, we will denote subsets of random variables indexed by $A$ as $X_A \in \mathcal{X}_A \equiv \prod_{i\in A} \mathcal{X}_i$. For notational conciseness, we will sometimes use index sets to denote random variables themselves, that is, using $V$ and $A$ to denote $X_V$ and $X_A$, respectively, and similarly using lower case $x_A$ to denote $x_A \in \mathcal{X}_A$. Similarly, by extension, we will also use $V_A$ to denote $X_A$ and $V_i$ to denote $X_i$. Finally, $M$ is usually used as a mediator, and $\mathcal{M}$ is an \SCM.

\section{Preliminaries and Informational Decomposition of SCM}

\label{sec:info}

The centre of modern causal modeling theory lies the structural causal model (SCM) (also known as structural equation model) that makes graphical assumptions of the underlying data generating process.\footnote{Another popular causal modeling framework is the potential-outcome framework, and one can refer to \citet{imbens2020potential} for more details.} There are many different formulations of SCM in the literature (see, e.g., \cite{bongers2016theoretical, pearl2009causal, pearl2019causal, scholkopf2019causality}), among which we use the definition in \citet{bareinboim2020pearl} in this paper.

\begin{definition}[SCM]
	\label{SCM-def}
	A structural causal model (SCM) $\mathcal{M}$  is a 4-tuple $(\mathbf{U}, \mathbf{V}, \mathcal{F}, P(\mathbf{U}))$, where
	\begin{itemize}
		\setlength\itemsep{0em}
		\item $\mathbf{U}$ is a set of background variables, also called exogenous variables, that are determined by factors outside the model;
		\item $\mathbf{V}$ is a set $\{V_1, V_2, ..., V_n\}$ of variables, called endogenous, that are determined by other variables in the models- that is, variables in $\mathbf{U} \cup \mathbf{V}$;
        \item $\mathcal{F}$ is a set of functions $\{f_1, f_2, ..., f_n\}$ such that each $f_i$ is a mapping from(the respective domains of) $U_i \cup Pa_i$ to $V_i$, where $U_i \subseteq \mathbf{U}, Pa_i \subseteq \mathbf{V} \setminus V_i$, and the entire set $\mathcal{F}$ forms a mapping from $\mathbf{U}$ to $\mathbf{V}$. That is, for $i \in V = \{1, ..., n\}$, each $f_i \in \mathcal{F}$ is such that
        \begin{equation}
            v_i \leftarrow f_i(pa_i, u_i),
        \end{equation}
        \item $P(\mathbf{U})$ is a probability function defined over the domain of $\mathbf{U}$.
	\end{itemize}
\end{definition}

Mechanisms are activities and entities organized to produce some phenomena which are represented by structural equations in an \SCM. A structural model is Markovian if the exogenous parent set $U_i, U_j$ are independent whenever $i \neq j$, and the associated causal graph for a Markovian SCM $\mathcal{M} = (\mathbf{U}, \mathbf{V}, \mathcal{F}, P(\mathbf{U}))$ can be defined as follows:

\begin{definition}[Causal Diagrams] $\mathcal{G}(\mathcal{M})=(V, E)$ is said to be a \emph{causal diagram} (of $\cal M$) if constructed as follows:
\begin{itemize}
	\setlength\itemsep{0em}
    \item add a vertex for every endogenous variable in the set $V$,
    \item add an edge $(V_j \to V_i)$ for every $i \in V$ if  $V_j$ appears as an argument of $f_i \in  \cal F$ in the edge set $E$.
\end{itemize}
\end{definition}

Various difference-making and production accounts in philosophy \footnote{See more details on \cite{illari2014causality}.} have been taken account by Judea Pearl for developing the structural causal modeling framework. However, the information accounts of causality, one of major approaches to study causality, has been comparatively neglected. The view of causality as information transfer was first proposed in \cite{collier1999causation}, further developed by \cite{illari2011theories} and \cite{illari2014causality}, and recently applied in neuroscience by \cite{feltz2019free}. Inspired by this view of causality, we propose the informational decomposition of \SCM \, in the following:

\begin{definition}[Informational Decomposition of \SCM]
	\label{info-SCM}
	Consider an \SCM $\mathcal{M} = (\mathbf{U}, \mathbf{V}, \mathcal{F}, P(\mathbf{U}))$. The \emph{informational decomposition} of $\mathcal{M}$ is defined as, for any $i\in V$, 
    \begin{equation}\label{eqn1}
    \left\{
    \begin{aligned}
    v_{i} &\leftarrow f_i(e_{pa(i), i}, u_i), \\
    e_{j, i} &\leftarrow v_j,
    \end{aligned}
    \right.
    \end{equation}	
    where $e_{j, i}$ represents the information on edge $(j, i)$ received from its input node $V_j$. \footnote{$pa_i$ usually stands for functional parents of node $i$ in an \SCM, while $pa(i)$ stands for parents in a causal diagram.} 
\end{definition}

Essentially, it can be interpreted as separating every causal mechanism $f_i$ into two part --- information process and information transfer. Mechanisms could be seen as information channels which is one centre idea of the informational accounts\citep{collier1999causation, illari2014causality}. Under this information transferring interpretation of \SCM, both \emph{do} intervention\citep{pearl1995causal} and \emph{info} intervention \citep{heyang2019info} are defined as following:

\begin{definition}[\emph{Do} and \emph{info} intervention]
	\label{def:intervention}
	Given an SCM $\mathcal{M} = (\mathbf{U}, \mathbf{V}, \mathcal{F}, P(\mathbf{U}))$ and any $I \subseteq V, v'_I \in \mathcal{V}_I$, the the \emph{do intervention} $do(v'_I)$ can be defined as a modification of structural equations to
    \begin{equation}
    \left\{
    \begin{aligned}
    v_{i} &\leftarrow f_i(e_{pa(i), i}, u_i) \, \text{if} \, i \notin I \, \text{else} \, v'_i  \\
    e_{j, i} &\leftarrow v_j 
    \end{aligned}
    \right.
    \end{equation}	
    while keeping everything else constant.
    And \emph{info intervention} $\info(V_I=v'_I)$ (or, in short, $\info(v'_I)$) maps $\mathcal{M}$ to the info-intervention model $\mathcal{M}^{\sigma(v'_I)}$  with a modified mechanism for every $i \in V$:
    \begin{equation}
    \left\{
    \begin{aligned}
    v_{i} &\leftarrow  f_i({e}_{pa(i), i}, u_i) \\
    {e}_{j, i} &\leftarrow v_j \, \text{if} \, j \notin I \, \text{else} \, v'_j  
    \end{aligned}
    \right.
    \end{equation}	
    while keeping everything else constant.
\end{definition}

The informational decomposition of \SCM \, reflect important epistemological distinctions about causality and lead to an alternative approach for mediation analysis. Comparing to the \emph{do} intervention, the \emph{info} intervention changes the information transferring instead of information processing mechanisms, which facilitates communication and theoretical focus as complementing causal modeling in terms of \emph{do} operator. This motivates us to proposed a novel path intervention as modification of information transfer mechanisms. In the following section, we propose a path intervention for path-specific effect based on the above informational decomposition.

\section{Path Intervention and Intervention Diagrams}
\label{sec:path}

Causal questions, such as what if we make something happen, can be formalized by using \emph{do} intervention. However, the \emph{do} intervention makes the intervention variable deterministic, by hypothetically forcing its value to a given constant. Henceforth, this causal notation is powerful but sometimes not enough for certain causal tasks, e.g., nature directed effects(NDE) defined by nested counterfactuals in mediation analysis. The dictum ``no causation without manipulation'' calls for an interventionist approach, recently addressed by \cite{robins2020interventionist} based on single world intervention graphs(SWIG) with certain limitation. Here, we are giving a explicit definition of path intervention based on informational decomposition of \SCM. 

% Specifically, for a intervention variable  $A$, the outcome variable of interest $Y$, and a path $\pi$ from $A$ to $Y$, the \emph{path intervention} is defined as follows.

\begin{definition}[Path intervention]
	\label{def:path-intervention}
	Consider a Makovian SCM $\mathcal{M} = (\mathbf{U}, \mathbf{V}, \mathcal{F}, P(\mathbf{U}))$. Then,
	the path intervention $\pi(A=a')$ (or, in short, $\pi({a'})$) along a causal path $\pi$ from $A$ to $Y$ maps $\mathcal{M}$ to the path-intervention model $\mathcal{M}^{\pi(a')}$
	with a modified causal mechanism for every $i \in V$:
	 \begin{equation}
	 \label{eq:path}
    \left\{
    \begin{aligned}
    v_{i} &\leftarrow  f_i({e}_{pa(i), i}, u_i), \\
    e_{j, i} &\leftarrow v_j, \\
    %v_i \, \mathrm{if} \, i \notin \pi \, \mathrm{else} \,
    v^{\pi}_{i} &\leftarrow  f_i(e'_{pa(i), i}, u'_i), \\
    e'_{j, i} &\leftarrow v_j \, \text{if} \, (j, i) \notin \pi ;\, \text{else if } V_j \text{ is } A \,,  a'; \text{else} \, v^{\pi}_{j}, \\
    \end{aligned}
    \right.
    \end{equation}
 while keep everything else constant. Here, $v_i^{\pi}$ (standing for $v_i^{\pi(a')}$) and $v_i$ are in corresponding counterfactual and factual domains, respectively, $u'_i$ is a realization of ${U}'_i$, 
 and ${U}'_i$ is the $i$th entry of $\mathbf{U}'$
 which is an independent and identically distributed (i.i.d.) copy of $\mathbf{U}$.
\end{definition}

In $\mathcal{M}^{\pi(a')}$, we have a set of 
endogenous variables $\mathbf{V}\cup\mathbf{V}^{\pi(a')}$, where $\mathbf{V}^{\pi(a')}=\{V_1^{\pi(a')},V_2^{\pi(a')},...,V_n^{\pi(a')}\}$, and all $V_i^{\pi(a')}$ (or, in short, $V_i^\pi$) are referred as the effect variables of this path intervention $\pi(a')$. Hence, the path intervention in Def. \ref{def:intervention}  considers both factual and counterfactual worlds to capture the cross-world nature of path-specific effects. Moreover,  since the information $A=a'$ is only transferred to the descendants of $A$ along the path $\pi$ in 
$\mathcal{M}^{\pi(a')}$, it is easy to see that  the effect variables $V_i^{\pi} \stackrel{d}{=} V_i$ for $(j,i) \notin \pi$. In view of this fact, we  marginalize out the effect variables $V_i^{\pi}$ for $(j,i) \notin \pi$ in $\mathcal{M}^{\pi(a')}$.  Intuitively, our path intervention creates the counterfactual variables $V^{\pi}$ to transfer the information $A = a'$ exclusively through the causal path $\pi$ while allowing the information outside $\pi$ to remain unchanged.

% The $V_i^{\pi(a')}$(in short, $V_i^\pi$) is referred as the effect variable of this path intervention. From the defintion, we can see that comparing to \emph{do}(or \emph{info}) intervention in Def. \ref{def:intervention}, \emph{path} intervention are considering both factual and couterfactual world for the cross-world nature of path-specific effects. Moreover, note that only the causal mechanisms on descendants of $A$ could may change in the counterfactual world, thus effect variable $A^{\pi(a')}=A$ and $V_i^{\pi(a')} = V_i, j \notin \pi$. In fact, the intuition behind the path intervention is very simple, we are considering what effect will the information of $A=a'$ have (on the outcome $Y$) pass (only) through causal path $\pi$ while keeping everything else constant, leading to a change of mechanism on the variables along the causal path. Comparing to the path-specific interventions in Def. 6 of \cite{zhang2018non}, in which the $\pi$-specific counterfactual is simply assigning the treatment $do(a')$ exclusively to the causal path $\pi$ while allowing all other causal paths to behave naturally, our path intervention counterfactual variable $Y^{\pi(a')}$ is simply making the treatment information $A = a'$ exclusively pass through the causal path $\pi$ while allowing all other information outside from the $\pi$ to remain the same as origin.

We use use the NIE(natural indirect effect) as a toy example to show how our path intervention related to existing literatures.

\begin{example}[Natural indirect effect]
\label{eg1}
For an SCM with three endogenous variables, including treatment $A$, mediator ${M}$, Outcome $Y$, satisfies
    \begin{equation}
    \left\{
    \begin{aligned}
    a &\leftarrow  f_A(u_A) \\
    m &\leftarrow f_M(a, u_M) \\
    y &\leftarrow f_Y(a, m, u_Y)
    \end{aligned}
    \right.
    \end{equation}
The natural indirect effect(NIE) of $Y$ on $A$ through the mediator ${M}$ depends on $Y(a, M(a'))$ --- a variable in which two different levels of $a$ are nested within the counterfactual for $Y$. 
\end{example}

For a causal path $\pi: A \to M \to Y$ with transferred information $A=a'$, the path intervention results in a set of modified equations
    \begin{equation}
    \left\{
    \begin{aligned}
    a &\leftarrow  f_A(u_A) \\
    m &\leftarrow f_M(a, u_M) \\
    y &\leftarrow f_Y(a, m, u_Y) \\
    a^\pi &\leftarrow  f_A(u'_A) \\
    m^\pi &\leftarrow f_M(e'_{AM}, u'_M); e'_{AM} \leftarrow a'\\
    y^\pi &\leftarrow f_Y(e'_{AY}, e'_{MY}, u'_Y); e'_{AY} \leftarrow a, e'_{MY} \leftarrow m^\pi
    \end{aligned}
    \right.
    \end{equation}
Since the causal mechanisms of $a$ and $a^\pi$ coincide with the same input distribution information, the distributions of corresponding variables are the same. Henceforth, we marginalize  $A^\pi$  out the model, results in a causal model of variables of $(A, M, Y, M^\pi, Y^\pi)$ with mechanisms:
\begin{equation}
\label{eg1:pi-info}
\left\{
\begin{aligned}
a &\leftarrow  f_A(u_A) \\
m &\leftarrow f_M(a, u_M) \\
y &\leftarrow f_Y(a, m, u_Y) \\
m^\pi &\leftarrow f_M(e'_{AM}, u'_M); e'_{AM} \leftarrow a'\\
y^\pi &\leftarrow f_Y(e'_{AY}, e'_{MY}, u'_Y); e'_{AY} \leftarrow a, e'_{MY} \leftarrow m^\pi
\end{aligned}
\right.
\end{equation}
We can derive a set of structural equations ruling out $e'_{\cdot\cdot}$: 
\begin{equation}
\label{eg1:pi}
\left\{
\begin{aligned}
a &\leftarrow  f_A(u_A) \\
m &\leftarrow f_M(a, u_M) \\
y &\leftarrow f_Y(a, m, u_Y) \\
m^\pi &\leftarrow f_M(a', u'_M)\\
y^\pi &\leftarrow f_Y(a, m^\pi, u'_Y)
\end{aligned}
\right.
\end{equation}
In the above example, the path specific effect variable $Y^\pi$ are different  from $Y(a, M(a'))$ in two aspects. First, NIE usually considers the contrast of outcome under two given different levels $a, a'$ of the treatment, then mathematically $Y(a, M(a')) \neq  Y(A, M(a'))=Y^{\pi(a')}$. Second, the interpretation is completely different for $Y^\pi$, which is the effect of $Y$ had the information $A=a'$ passed through the path $\pi: A \to M \to Y$ while keeping other transferred information unchanged. Beyond this example, our path specific effects can even be well-defined for causal diagram with cycle, e.g. Fig \ref{fig:base}(b), by the solution of an \SCM \, with modified structural equations (\ref{eq:path}). 

% The presented distinctions in the above example suggests that our path-specific counterfactuals would be different from traditional ones which usually could be seen as generalizations of NIE. 

The recognition that there are mechanisms underlying the phenomena of interest, but we usually cannot determine them precisely, gives rise to the discipline of causal inference \cite{pearl2000models}. Virtually every approach to causal inference works under the stringent condition that only partial knowledge of the underlying SCM is available, and usually the information of structural dependency(i.e. causal diagrams) might be sufficient for many causal effect estimation problems. Particularly, \emph{do}-calculus is available for identification of causal queries when the causal structure is known, while its graphical criteria are placed on graphs which obtained by removing input/output edges on intervened variables from the original causal graph. Since every hypothetical intervention is creating a counterfactual world for domain variables, then we can define a causal graph corresponding to the modified mechanisms. We here introduce a novel graphical tool \emph{intervention diagram} 
to further illustrate how the path intervention works.

\begin{definition}[Intervention diagram]
    For an intervention, which can be \emph{do}, \emph{info} or \emph{path} intervention, the causal diagram for the modified mechanisms induced by the intervention is called the \emph{intervention diagram (or graph)}.\footnote{Note that intervention diagrams are not restricted to path intervention. For some cases, we may also want to include exogenous variables to represent the causal mechanisms more detailed graphically, we call the causal graph of the intervention model with both exogenous and endogenous variables the \emph{augmented intervention diagram}. }
\end{definition}

To emphasis the informational accounts of causality and differences from previous causal graphs, our intervention diagrams, which is created based on the modified mechanisms by an intervention on SCM, will use lowercase letter to represent variables. The following toy example is used to present the intervention diagrams for different interventions. 

\begin{example}
	\label{eg2}
	An SCM $\mathcal{M}$ with endogenous variables treatment $T$, outcome $Y$, confounder $Z$ and theire corresponding exogenous variables $U_T, U_Z, U_Y$ and an element $t'$ in the domain of $T$, and its causal mechanisms represented by structural equations(the corresponding augmented causal graph is Fig\,\ref{fig:interventions}(a)):
	$$
	\begin{aligned}
	\begin{cases}
	z \leftarrow f_Z(u_Z),  \\   %\leftarrow
	t \leftarrow f_T(z, u_T), \\
	y \leftarrow f_Y(t, z, u_Y).
	\end{cases}
	\end{aligned}
	$$
	Its do-intervention SCM $\mathcal{M}^{do(t')}$ with augmented intervention diagram Fig\,\ref{fig:interventions}(b) has the following modified structural equations:
% 	$$
% 	\begin{aligned}
% 	\begin{cases}
% 	z \leftarrow f_Z(u_Z),  \\
% 	t \leftarrow t', \\
% 	y \leftarrow f_Y(t, z, u_Y).
% 	\end{cases}
% 	\end{aligned}
% 	$$    
% 	We can reformulate it with an informational decomposition:
	$$
	\begin{aligned}
	\begin{cases}
	z \leftarrow f_Z(u_Z),  \\   %\leftarrow
	t \leftarrow t', \\
	y \leftarrow f_Y(e_{TY}, e_{ZY}, u_Y); e_{TY} \leftarrow t, e_{ZY} \leftarrow z.
	\end{cases}
	\end{aligned}
	$$	
    Its info-intervention SCM $\mathcal{M}^{\info(t')}$ with the augmented intervention diagram Fig\,\ref{fig:interventions}(c) has the following modified structural equations:
	$$
	\begin{aligned}
	\begin{cases}
	z \leftarrow f_Z(u_Z),  \\
	t \leftarrow f_T(e_{ZT}, u_T); e_{ZT} \leftarrow z, \\
	y \leftarrow f_Y(e_{TY}, e_{ZY}, u_Y); e_{TY} \leftarrow t', e_{ZY} \leftarrow z.
	\end{cases}
	\end{aligned}
	$$    	
    Its path-intervention SCM $\mathcal{M}^{\pi(t')}$(for causal path $\pi:T \to Y$ with input information $T=t'$) with augmented intervention diagram Fig\,\ref{fig:interventions}(d) \footnote{Since $T^\pi, Z^\pi$ are not affected by the information $T=t'$ through the path $\pi$, then we marginalize out the mechanisms for them.} has the following modified structural equations:
	$$
	\begin{aligned}
	\begin{cases}
	z \leftarrow f_Z(u_Z),  \\   %\leftarrow
	t \leftarrow f_T(e_{ZT}, u_T); e_{ZT} \leftarrow z, \\
	y \leftarrow f_Y(e_{TY}, e_{ZY}, u_Y); e_{TY} \leftarrow t, e_{ZY} \leftarrow z, \\
	y^{\pi} \leftarrow f_Y(e'_{TY}, e'_{ZY}, u'_Y); e'_{TY} \leftarrow t', e'_{ZY} \leftarrow z.
	\end{cases}
	\end{aligned}
	$$ 
\end{example}

\begin{figure}[t]
\begin{center}
% \fbox{\rule{0pt}{2in} \rule{0.9\linewidth}{0pt}}
\includegraphics[width=0.8\linewidth]{./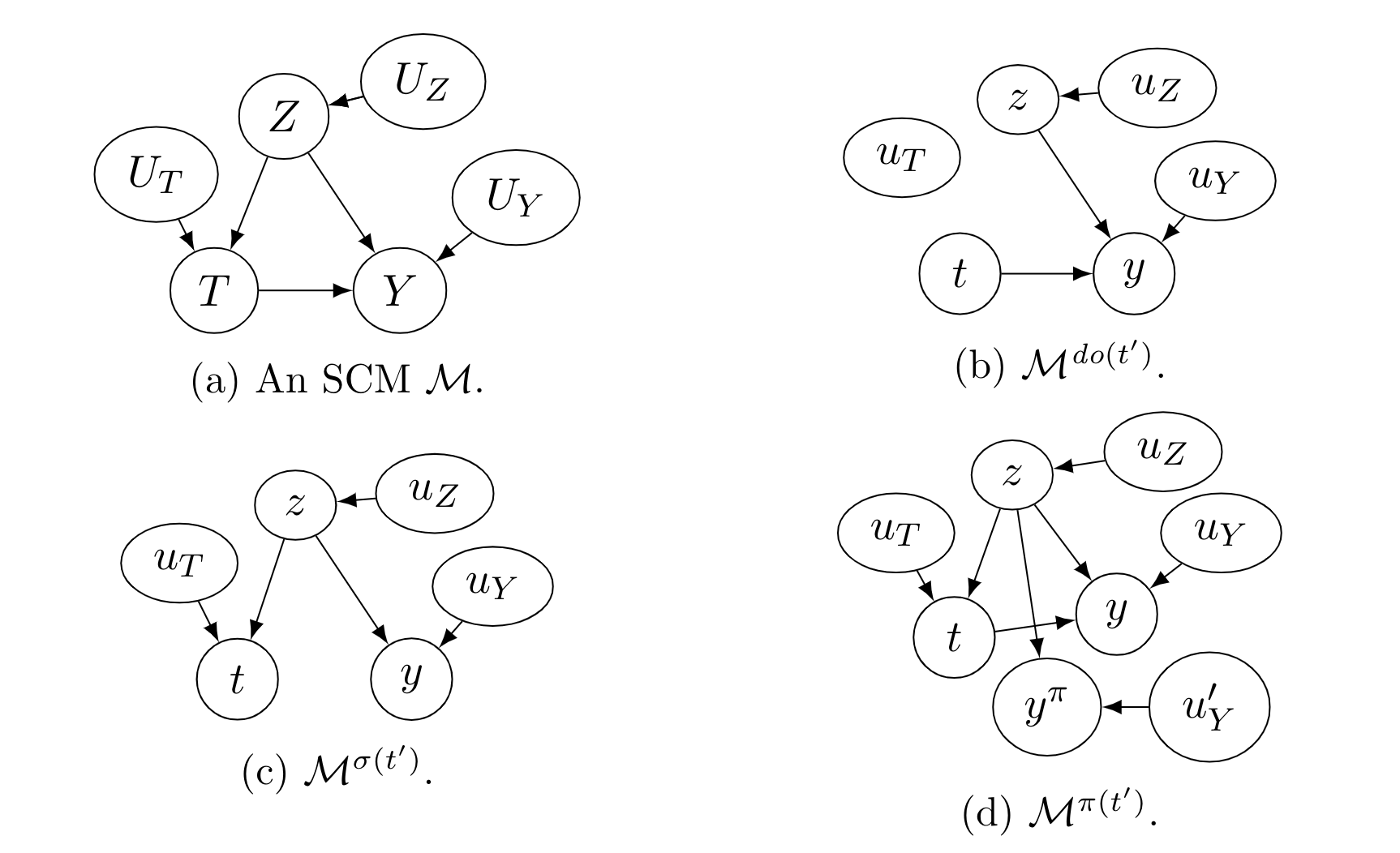}
\end{center}
\caption{Causal graphs of an SCM and three augmented intervention diagrams of its intervention SCMs.}
\label{fig:interventions}
\end{figure}

For conciseness, we usually construct the (augmented) causal graph of those intervention SCMs without separating the variables in the counterfactual world created by the hypothetical intervention from the factual world, except for path intervention. In particular, the counterfactual variable $Y^{do(t')}$ created by do intervention $do(t')$ is represented with notation $y$ instead of $y^{do(t')}$. Henceforth, we are using $y$ rather than $y^{do(t')}$ or $y^{\info(t')}$ in those structural equations in the formal definition of \emph{do} intervention and \emph{info} intervention. However, we find that the \emph{path} intervention is quite different from \emph{info} or \emph{do} intervention in that it includes cross-world structural equation between two endogenous variable represented by $z$ and $y^\pi$. In fact, the cross-world counterfactuals involved in mediation analysis is exactly what makes it more complicated, and we need the counter-intuitive recanting witness criteria for identification with traditional methods. In contrast, our novel path specific effects exhibit intuitive interpretation and identification properties. 

% We now show that path interventions are usually need to add counterfactual variables along the path.  

% Now we have defined the path interventions and the associated intervention diagrams for both counterfactual responses and factual variables, then how to identify effects of a path intervention? In particularly, how to identify path-specific effect for a path intervention $\pi(a')$ given the causal diagram but without knowledge of specific form of causal mechanisms $\{f_i\}_{i\in V}$ of an SCM $\cal M$? This is the main topic of our next section. 

\section{Identification of Path-specific Effects}
\label{sec:ident}

\citet{robins2020interventionist} defines the potential outcome $V_i(\pi,a,a')$ by setting $A$ to $a$ for path $\pi$ that end in $V_i$, and setting $A$ to $a'$ for the purposes of proper causal paths from $A$ to $V_i$ other than $\pi$. Formally, the definition is as follows, for any $V_i \in V$:
\begin{align}
\label{eqn:pse}
V_i(\pi, a, a') &\equiv a \;\text{ if }\; V_i \text{ is } A,\\
V_i(\pi, a, a') &\equiv %\notag %\\
%& %\hspace{-8mm}
V_i( \{ V_j(\pi, a, a') \mid V_j \in pa^{\pi}_i \}, \{ V_j(a') \mid V_j \in pa^{\overline{\pi}}_i \} )
\notag
\end{align}

where $V_j(a') \equiv a'$ if $V_j \text{ is } A$, and given by recursive substitution otherwise,  %(\ref{eqn:rec-sub}) 
$pa^{\pi}_i$ is the set of parents of $V_i$ along the path $\pi$, and
$pa^{\overline{\pi}}_i$ is the set of all other parents of $V_i$. Their path specific effects rely on the recursive assumption, and no recanting witness for identification. Although our path-specific effects can be defined for non-recursive SCMs, but we will not further address that problem in this paper considering the numerous complications for cyclic causal models. 

Identification for path-specific effects based on nested counterfactuals for causal graphical models is governed by the criterion known as the recanting witness criterion \citep{avin2005identifiability, shpitser2016causal, shpitser2020multivariate}. Here let's look into the causal diagrams Fig. \ref{fig:base}(a), a submodel of $X, M, Y$ can be induced through marginalization with the causal graph $X \to M \to Y$ and $M \to Y$. This is the simplest case of mediation analysis, hence the path specific effect along $\pi: X \to Y$ can be identified. However, the corresponding path $\pi: X \to A \to Y$ specific effect can not be identified because $A$ is a recanting witness. This leads to a model with additional observational information of $A$ to disable the identifiability of path-specific effect which conceptually contradicts the intuition that the more information the better. 

In contrast, our novel path-specific effects induced by our path intervention can be identified regardless of whether it satisfies recanting witness criteria or not. In particular, for a path intervention $\pi(a')$, the joint distribution of counterfactual variables can always be identified by the $\pi$-formula. To formulate our result, we denote $desc^\pi(A)$ as descendants of $A$ in the path $\pi$, i.e., all nodes in $\pi$ with a parent node in causal path $\pi$.

\begin{theorem}[$\pi$-formula]
In a causal diagram $\cal G$(of a Markovian \SCM $\cal M$) with a factorized probability $p$, for a causal path $\pi: A \to \dots \to Y$, the joint counterfactual distribution over variables $p(v^\pi, v)$ for path intervention $\pi(A=a')$ is identified via a equation called the $\pi$-formula:
\begin{equation}
\begin{aligned}
    p(v^\pi_{desc^\pi(A)}, v) &= \prod_{k \in desc^\pi(A)} p(v^{\pi}_k|e_{pa(k) , k}^\pi) \cdot \prod_{i \in V} p(v_i | v_{pa(i)})
\end{aligned}
\end{equation}
where $e^{\pi}_{j, k} = v_j \, \text{if} \, (j, k) \notin \pi ;\, \text{ else if } V_j \text{ is }  $A$ \,,  a'; \text{else} \, v^{\pi}_{j}$.
\end{theorem}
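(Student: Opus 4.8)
The plan is to recognize the path-intervention model $\mathcal{M}^{\pi(a')}$ as a single enlarged Markovian \SCM\ over the extended endogenous set $\mathbf{V}\cup\mathbf{V}^\pi$, driven by the exogenous pair $(\mathbf{U},\mathbf{U}')$, and then to read off the $\pi$-formula as the Markov factorization of this enlarged model along its augmented intervention diagram, followed by a marginalization of the irrelevant counterfactual copies. First I would verify that $\mathcal{M}^{\pi(a')}$ is a well-defined acyclic \SCM: the factual block $\{v_i\}$ simply reproduces $\mathcal{M}$, and by (\ref{eq:path}) each counterfactual node $v^\pi_i$ depends only on factual nodes, on the constant $a'$, and on counterfactual nodes that precede it along $\pi$, so no directed cycle is introduced. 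Since $\mathbf{U}'$ is an i.i.d.\ copy of $\mathbf{U}$ with $\mathbf{U}\indep\mathbf{U}'$ and all coordinates mutually independent, the enlarged model is Markovian.

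Then the standard Markov factorization for a Markovian \SCM\ applies to the enlarged model and yields
\[
p(v,v^\pi)=\prod_{i\in V}p(v_i\mid v_{pa(i)})\cdot\prod_{i\in V}p\bigl(v^\pi_i\mid e^\pi_{pa(i),i}\bigr),
\]
where for each $i$ the parent set of $v^\pi_i$ in the augmented diagram is exactly the argument list $e^\pi_{pa(i),i}$ dictated by the routing rule: a non-path parent contributes the factual value $v_j$, the path-parent $A$ contributes the fixed constant $a'$ (hence no random parent), and any other path-parent contributes the counterfactual value $v^\pi_j$. The first product $\prod_{i\in V}p(v_i\mid v_{pa(i)})$ is precisely the factual factor appearing in the claimed $\pi$-formula.

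It then remains to marginalize the counterfactual nodes $v^\pi_i$ with $i\notin desc^\pi(A)$. The key observation is that such a node lies off the path, so every incoming edge $(j,i)$ satisfies $(j,i)\notin\pi$ and therefore $e^\pi_{j,i}=v_j$; consequently $v^\pi_i$ reads only factual parents and, by the same routing rule, is never read by any node on $\pi$. Hence each off-path counterfactual node is a childless sink whose factor is a genuine conditional kernel, so summing over it gives $\sum_{v^\pi_i}p(v^\pi_i\mid v_{pa(i)})=1$; integrating these nodes out in reverse topological order leaves exactly $p(v^\pi_{desc^\pi(A)},v)=\prod_{k\in desc^\pi(A)}p(v^\pi_k\mid e^\pi_{pa(k),k})\cdot\prod_{i\in V}p(v_i\mid v_{pa(i)})$, which is the $\pi$-formula. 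Finally, because each counterfactual mechanism $f_k$ and its noise law are shared with the factual world ($U'_k\stackrel{d}{=}U_k$), the kernel $p(v^\pi_k\mid e^\pi_{pa(k),k})$ coincides with the observational conditional $p(v_k\mid v_{pa(k)})$ evaluated at the mixed factual/counterfactual/constant arguments, so the whole expression is identified from $p$ and $a'$ alone.

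The main obstacle is the first step --- certifying that the cross-world object $\mathcal{M}^{\pi(a')}$ genuinely is an acyclic Markovian \SCM\ so that the factorization theorem is licensed; once that is established, the remainder is bookkeeping about parent sets together with a routine marginalization. Particular care is needed with the cross-world edges $v_j\to v^\pi_i$ (off-path factual parents feeding counterfactual children) and with the fact that the path-parent $A$ enters as a constant rather than a variable, since these are exactly the features that distinguish the augmented intervention diagram from an ordinary causal diagram and that must be handled correctly when identifying the parent set of each $v^\pi_k$.
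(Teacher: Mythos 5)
Your proposal is correct and follows essentially the same route as the paper's own proof: factorize the enlarged acyclic intervened \SCM\ into the factual product times the counterfactual product with parents given by the routing rule $e^\pi_{pa(k),k}$, then marginalize out the off-path counterfactual nodes, which are childless leaves of the intervention diagram. Your additional remarks --- the explicit acyclicity check and the observation that each counterfactual kernel coincides with the corresponding observational conditional because $U'_k\stackrel{d}{=}U_k$ --- merely make explicit steps the paper leaves implicit.
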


\begin{proof}
By definition of path intervention, the modified causal mechanisms for $(\mathbf{V}^\pi, \mathbf{V})$ are, for any  $i \in V$:

\begin{equation*}
\left\{
\begin{aligned}
v_{i} &\leftarrow  f_i({e}_{pa(i), i}, u_i) \\
e_{j, i} &\leftarrow v_j \\
%v_i \, \mathrm{if} \, i \notin \pi \, \mathrm{else} \,
v^{\pi}_{i} &\leftarrow  f_i(e'_{pa(i), i}, u'_i) \\
e'_{j, i} &\leftarrow v_j \, \text{if} \, (j, i) \notin \pi ;\, \text{else if } V_j \text{ is } A \,,  a'; \text{else} \, v^{\pi}_{j} \\
\end{aligned}
\right.
\end{equation*}

The intervened SCM is also acyclic with independent errors, thus factorize with the following density:

\begin{equation*}
\begin{aligned}
    p(v^\pi, v) &= \prod_{k \in V} p(v^{\pi}_k|e_{pa(k) , k}^\pi) \cdot \prod_{i \in V} p(v_i | v_{pa(i)})
\end{aligned}
\end{equation*}

where $e^{\pi}_{j, k} = v_j \, \text{if} \, (j, k) \notin \pi ;\, \text{ else if } V_j \text{ is }  $A$ \,,  a'; \text{else} \, v^{\pi}_{j}$. Notice that for any counterfactual variable not in $desc^\pi(A)$, it receives the information directly from factual variables and the i.i.d. copy of exogenous variable. These counterfactual variables are leafs of the intervention diagram, hence marginalizing out them directly leads to the $\pi$-formula.
\end{proof}

We then can formally identify the path specific effect for causal diagram Fig. \ref{fig:base}(a) in the following example:

\begin{example}
\label{eg:recanting}
For a causal diagram(of a Markovian \SCM $\cal M$) for variables $X, A, M, Y$, with causal relationships $X \to A \to M \to Y$ and $A \to Y$ in Fig \ref{fig:base}(a), the joint distribution of counterfactual and factual variables in intervention diagram Fig. \ref{fig:ident1} of path intervention $\pi(x')$, where $\pi: X \to A \to Y$, can be identified the following equation:
$$
\begin{aligned}
    &p(y^\pi, a^\pi, y, m, a, x) = p(y^\pi|a^\pi, m) p(a^\pi|x')  p(y|a, m) p(m|a) p(a|x) p(x)
\end{aligned}
$$ 

\noindent
Then the density of effect variable $Y^\pi$ can be obtained by summation over $\{a^\pi, y, m, a, x\}$:
$$
\begin{aligned}
    p(y^\pi) &= \sum_{a^\pi, m, a, x} p(y^\pi, a^\pi, y, m, a, x) \\
    &= \sum_{m, a^\pi, a, x}  p(y^\pi|a^\pi, m) p(a^\pi|x')  p(m|a) p(a|x) p(x) \\
\end{aligned}
$$

\end{example}

\begin{figure}[http]
\label{}
	\begin{center}
		\begin{tikzpicture}[>=stealth, node distance=2cm]
		\tikzstyle{format} = [draw, thick, circle, minimum size=6.0mm,
		inner sep=1pt]
		\tikzstyle{unode} = [draw, thick, circle, minimum size=4.0mm,
		inner sep=0pt, outer sep=0.9pt]
		\tikzstyle{square} = [draw, very thick, rectangle, minimum size=4mm]
		\path[->,  line width=0.9pt]
		node[format, shape=ellipse] (a) {$a$}
		node[format, shape=ellipse, above right of=a] (m) {$m$}
		node[format, shape=ellipse, right of=a] (y) {$y$}
		node[format, shape=ellipse, left of=a] (x) {$x$}
		node[format, shape=ellipse, above right of=x] (ac) {$a^\pi$}
		node[format, shape=ellipse, above right of=ac] (yc) {$y^\pi$}
        (x) edge[red] (a)
		(a) edge[blue] (m)
		(m) edge[blue] (y)
		(a) edge[red] (y)
		(ac) edge[red] (yc)
		(m) edge[blue] (yc)
		;
		\end{tikzpicture}
	\end{center}
\caption{Intervention diagram induced by path intervention $\pi(x')$ , where $\pi:  X \to A \to Y$ with input information $X=x'$.
\label{fig:ident1}}
\end{figure}
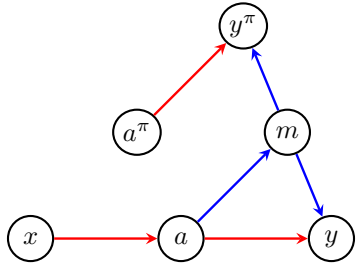

% Someone might be confused with two things. First, why we are not using uppercase letter to denote variables in intervention diagrams? This is because we want to emphasis our interventions explicitly modify the assignment mechanisms of both factual and counterfactual variables. Second, for the intervention diagram induced by a path intervention, the counterfactual variables directly accept information from factual variables which can be read from the corresponding assignment mechanisms. This is different from \emph{do} or \emph{info} intervention in which all variables can be viewed as counterfactual variables (though some of them are not affected by the intervention, which have the same distribution with their corresponding factual variables). In other words, we usually have to place fact and counterfactual variable in the same graph which is similar to twin-SCM \cite{pearl2009causal} \footnote{Our intervention diagram is apparently different from twin-SCM in the way of connecting factual and counterfactual variables.}. 

Let’s see a more complicated example in which $M$ is a recanting witness.

\begin{example}
For a causal diagram(of a Markovian \SCM $\cal M$) with a probability $p$ that factorize, we consider the path intervention $\pi(a')$ for the causal path $\pi: A \to M \to Y$ with input information $A=a'$. Then according to intervention graph Fig. \ref{fig:ident2}(b) with $\pi$-formula, we have
$$
p(y^\pi, m^\pi, z, m, a, w) = p(y^\pi|m^\pi, w, z) p(y^\pi|w, a') \cdot p(z|m, a) p(m|a, w) p(a|w) p(w)
$$

Note that we use latent projection onto variables of interest. In particular, we exclude the factual variable $Y$ in the intervention graph for conciseness,  which can be marginalized over the $\pi$-formula. However you cannot marginalize out factual variable $M$ to identify the distribution of $Y^\pi$.

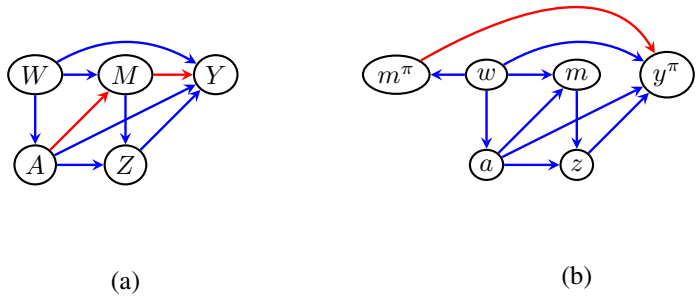
\begin{figure}[http]
	\begin{center}
		\begin{tikzpicture}[>=stealth, node distance=1.2cm]
		\tikzstyle{format} = [draw, thick, circle, minimum size=4.0mm,
		inner sep=1.8pt]
		\tikzstyle{unode} = [draw, thick, circle, minimum size=1.0mm,
		inner sep=0pt,outer sep=0.9pt]
		\tikzstyle{square} = [draw, very thick, rectangle, minimum size=4mm]
		\begin{scope}[xshift=0cm]
		\path[->,  line width=0.9pt]
		node[format, shape=ellipse] (a) {$A$}
		node[format, shape=ellipse, above of=a] (w) {$W$}	
		node[format, shape=ellipse, right of=w] (m) {$M$}	
		node[format, shape=ellipse, below of=m] (z) {$Z$}	
		node[format, shape=ellipse, right of=m] (y) {$Y$}
		node[below=1.0cm of z]{(a)}	
		(w) edge[blue] (a)
		(w) edge[blue] (m)
		(w) edge[blue,out=30,in=150] (y)
		(a) edge[red] (m)
		(a) edge[blue] (z)
		(a) edge[blue] (y)
		(m) edge[red] (y)
		(m) edge[blue] (z)
		(z) edge[blue] (y)
		;
		\end{scope}
        \begin{scope}[xshift=6cm]   
		\path[->,  line width=0.9pt]
		node[format, shape=ellipse] (a) {$a$}
		node[format, shape=ellipse, above of=a] (w) {$w$}	
		node[format, shape=ellipse, right of=w] (m) {$m$}		
		node[format, shape=ellipse, left of=w] (mpi) {$m^{\pi}$}	
		node[format, shape=ellipse, below of=m] (z) {$z$}	
		node[format, shape=ellipse, right of=m] (y) {$y^{\pi}$}
		node[below=1.0cm of z]{(b)}	
		(w) edge[blue] (a)
		(w) edge[blue] (m)
		(w) edge[blue] (mpi)
		(w) edge[blue,out=30,in=150] (y)
		(a) edge[blue] (m)
		(a) edge[blue] (z)
		(a) edge[blue] (y)
		(mpi) edge[red,out=30,in=120] (y)
		(m) edge[blue] (z)
		(z) edge[blue] (y)
		;
		\end{scope}
	    \end{tikzpicture}
	\end{center}
\caption{
(a) A causal diagram with recanting witness $M$; (b) Intervention diagram.
\label{fig:ident2}}
\end{figure}

\end{example}

This way of constructing intervention diagram is somewhat similar to twin-network \citep{pearl2009causal} for both including factual and counterfactual variables. However, there are completely different in several ways. First, the twin-SCM only have $U_V$ which connects factual and counterfactual variables, our intervention diagram includes an i.i.d. copy of $U_V$ for the corresponding intervened SCM.  Second, there are no direct edges between factual and counterfactual variables in a twin-SCM. In contrast,  there are no direct edges between two counterfacutal variables unless there exists an edge in $\pi$ connecting them, which leads to counterfactual variables usually have no output edge in the intervention diagram. In other words, parents of counterfactual variables are usually factual variables, e.g. $A, Z, W \to Y^\pi$ in Fig. \ref{fig:ident2}(b).
Third, Our intervention diagrams include probabilistic relations instead of deterministic relations, while the incompleteness of d-separation occurs in twin-SCM due to deterministic relations \citep{shpitser2020multivariate}.

% The fact that our path-specific effect $Y^{\pi(z')}$ is mathematically different from $Y(z, z', \pi)$(see e.g. \citet{shpitser2016causal}) leads to  different identification conditions. In many situations, the causal graphical model, with probability measure defined by $\pi$-formula for variables $(\mathbf{V}^\pi, \mathbf{V})$, can latent projection onto a set of variables of interest for identification. Let's see another more complicated example of recanting witness with latent projection onto variables of interest.

\section{Concluding remarks and Discussion}
\label{sec:conclude}

In this paper, by relating the informational account to \SCM, we propose a novel \emph{path} intervention framework to formalize path-specific effects aligned the dictum ``no causation without manipulation''. In the \emph{path} intervention framework, the information account of causality has suggested of separating causal mechanisms into information transfer and process. Our path intervention explicitly manipulate the mechanisms, thus path-specific effects even can be well-defined for cyclic \SCM s. Intuitively, our path-specific effect $Y^{\pi(a')}$ is obtained by making the treatment information $A = a'$ exclusively pass through the causal path $\pi$ while allowing all other information outside from the $\pi$ to keep constant. Moreover, we address the identifiability of path-specific effects for causal diagrams and show that they can be identified even when recanting witness exists with the $\pi$-formula.

% In fact, it is obvious that our path intervention can easily be extended to a set of paths, and future work might be further explore the path-specific effects on cyclic SCMs.

\paragraph{More discussions.} Causality has been one of the basic topics of philosophy since the time of Aristotle. However during the last two decades or so, interest in causality has become very intense in the philosophy of science community, and a great variety of novel views on the subject have emerged and been developed. Among those novel views, it has been claimed that an informational account of causality, formally proposed and developed since \citet{collier1999causation} in philosophy, might be useful to the scientific problem of how we think about, and ultimately trace, causal linking, and so to causal inference and reasoning. An informational account of causality can be useful to help us reconstruct how science builds up understanding of the causal structure of the world. Broadly, we find mechanisms that help us grasp causal linking in a coarse-grained way. Then we can think in terms of causal linking in a more fine-grained way by thinking informationally. An informational account of causality may also give us the prospect of saying what causality is, in a way that is not tailored to the description of reality provided by a given discipline. And it carries the advantage over other causal metaphysics that it fares well with the applicability problem for other accounts of production (processes and mechanism)\citep{illari2014causality}. But from an application perspective, what benefits we can gain in causal modeling? Or it is just a rhetorical flourish? 

Algorithmic information theory has been used in causal modeling(see e.g. \citep{scholkopf2019causality}).It is also worth to mention that, broadly speaking, the information accounts of causality can also facilitate interpretation of existing widely used causal propositions. For example, regarding back-door/front-door criteria, the goal of which can be consistently considered as whether the observational information of a set of variables is enough or not to answer causal-effect estimation question, instead of conventional understanding such as controlling variables. Moreover, in general sense of information accounts, Pearl points out that questions in one layer of the causal hierarchy can only be answered when corresponding layer information are available\citep{pearl2019seven, bareinboim2020pearl}, and Scholk\"opf believes causal science will enable us act and decision with information from Lorenzian imagined space \citep{scholkopf2019causality}. The recently proposed Mini-Turing test for AI --- How can machines represent causal knowledge in a way that would enable them to access the necessary information swiftly, answer questions correctly, and do it with ease, as a human can? \citep{pearl2018book}. In summary, to build true intelligent machines, climb the ladder of data, information, knowledge and wisdom, we might need to incorporate the information accounts of causality into causal tasks.

\bibliography{egbib.bib}
\bibliographystyle{icml2021}

% \appendix

% \citet{robins2020interventionist}

% \section{Appendix}

% Optionally include extra information (complete proofs, additional experiments and plots) in the appendix.
% This section will often be part of the supplemental material.

\end{document}